\newif\iftaclinstructions
\newcommand{\instr}
\DeclarePairedDelimiter\abs{\lvert}{\rvert}%
\DeclarePairedDelimiter\norm{\lVert}{\rVert}%
\DeclarePairedDelimiter{\den}{\llbracket}{\rrbracket}
\let\oldabs\abs
\def\abs{\@ifstar{\oldabs}{\oldabs*}}
\let\oldnorm\norm
\def\norm{\@ifstar{\oldnorm}{\oldnorm*}}
\newtheorem{proposition}{Theorem}
\newtheorem{lemma}{Lemma}
\newtheorem{corollary}{Corollary}[proposition]
\newtheorem{lemcorollary}{Corollary}[lemma]
\theoremstyle{definition}
\newtheorem{definition}{Definition}
\def\Snospace~{\S{}}
\newcommand{\femb}{\phi}
\newcommand{\fact}{f}
\newcommand{\fattn}{s}
\newcommand{\AC}{\mathsf{AC}}
\newcommand{\NC}{\mathsf{NC}}
\newcommand{\TC}{\mathsf{TC}}
\newcommand\trans{\mathsf{AHAT}}
\newcommand{\poly}{\mathrm{poly}}
\renewcommand{\u}{\emph{u}}
\newcommand{\draftcomment}[3]{{\textcolor{#3}{[#1]#2}}}
\renewcommand{\draftcomment}[3]{}  % uncomment for submission
\lstdefinelanguage{RASP}
{
  morekeywords={select, selector_width, aggregate}
}
\definecolor{OliveGreen}{RGB}{0,102,51}
\newcommand\maj{\textsc{maj}}
\renewcommand{\O}{\mathrm{O}}
\newcommand{\new}[1]{{\color{blue} #1}}
\renewcommand{\new}[1]{#1}
\title{
% Hard Attention Isn't All You Need:\\
% \new{Circuit Upper Bounds for Saturated Transformers}
\new{Saturated Transformers are Constant-Depth Threshold Circuits}
}
\author{
    William Merrill\footnotemark[1]\;\;\footnotemark[2]
    \quad Ashish Sabharwal \footnotemark[1]
    \quad Noah A. Smith\footnotemark[1]\;\;\footnotemark[3] \\
    \footnotemark[1]\;\;Allen Institute for AI\quad
    \footnotemark[2]\;\;New York University\quad
    \footnotemark[3]\;\;University of Washington \\
    {\texttt{willm@nyu.edu}\;\; \texttt{\{ashishs,noah\}@allenai.org}}
}
\date{\today}
\begin{document}
\maketitle
\begin{abstract}
Transformers have become a standard neural network architecture for many NLP problems, motivating theoretical analysis of their power in terms of formal languages. Recent work has shown that transformers with \emph{hard} attention are quite limited in power \citep{hahn-2020-theoretical}, \new{as they} can be simulated by constant-depth \new{AND/OR} circuits \citep{angluin2021}. \new{However,} hard attention is a strong assumption, which may complicate the relevance of these results \new{in practice}. In this work, we analyze the circuit complexity of transformers with \emph{saturated} attention: a generalization of hard attention that more closely captures the attention patterns learnable in practical transformers. We first \new{show} that saturated transformers transcend the known limitations of hard-attention transformers. We then \new{prove} saturated transformers with floating-point values can be simulated by constant-depth \new{threshold} circuits\new{, giving the class $\TC^0$ as an upper bound on the formal languages they recognize.}
% Thus, the jump from hard to saturated attention can be understood as augmenting 
\end{abstract}

\section{Introduction}

Opening the ``black box'' \citep{blackboxnlp-2020-blackboxnlp} of the representations within neural networks is an important step towards building systems with robust and interpretable behavior. In NLP, one part of this question is analyzing the languages networks can model, and the mechanisms they use to represent linguistic structure and dependencies.

% \begin{figure}[t!]
%     \centering
%     \includegraphics[width=\columnwidth]{}
%     \caption{Relevant complexity classes in this paper. Hard-attention transformers (hardT) fall within the weak class $\AC^0$ \citep{angluin2021}, but we prove saturated transformers (satT) transcend $\AC^0$, and instead fall in $\AC^1$. Intermediate levels between the uniform variants of these classes (\u$\AC^0$ and \u$\AC^1$) correspond to key capabilities for processing language.}
%     \label{fig:main}
% \end{figure}

One path toward this goal is via formal analysis of specific network architectures \citep{merrill2021formal}; for example, recurrent neural networks (RNNs).
Due to their autoregressive formulation, formal linguistic analysis of RNNs has often characterized their power by relating them to automata-theoretic classes of formal languages \citep[][\emph{inter alia}]{weiss-etal-2018-practical, peng-etal-2018-rational, merrill-2019-sequential}.
Recently, however, RNNs have largely been overtaken in NLP by a new class of models: transformers \citep{vaswani2017attention}.
Transformers are not autoregressive, and therefore less naturally resemble automata, posing challenges to characterizing their linguistic capacity or inductive biases in the same terms as RNNs. Instead, some recent work has related them to circuit complexity classes, a direction that we continue to pursue in this paper.
Drawing on classical circuit lower bound results, \citet{angluin2021} and \citet{hahn-2020-theoretical} derive theoretical limitations of transformers with \emph{hard} attention, meaning the attention distributions focus all probability 
% \nascomment{pet peeve: there's not really any proper probabilistic semantics to attention weights; yes they look like probability distributions because they are nonnegative and sum to one, but it's confusing/misleading to refer to them as distributions.  this is not a super-important point and I understand that this terminology is widespread and there may be no simpler way to get the point across}
mass on one index. Together, their results show that $\AC^0$---the class of languages recognizable by constant-depth circuit families---upper bounds the \new{formal languages hard-attention transformers can recognize.}

\begin{table}[t!]
    \centering
    \begin{tabular}{l|cc}
        %  & \multirow{2}{|c|}{Activation Functions}  \\
         & \new{Float ($\mathbb{F}$)} & \new{Rational ($\mathbb{Q}$)} \\ \hline
        \new{Hard ($\eta$)} & $\subseteq \AC^0$ & $\subseteq \AC^0$ \\
        \new{Saturated ($\zeta$)} & $\subseteq \TC^0$ & $= \mathsf{ALL}$
    \end{tabular}
    \caption{Summary of combined results from \citet{angluin2021} and this paper. Each cell $\alpha, \mathbb D$ characterizes the languages recognizable by transformers with attention function $\alpha$ and datatype $\mathbb D$ (floats $\mathbb F$ or rationals $\mathbb Q$). $\AC^0$ and $\TC^0$ are circuit complexity classes, and $\AC^0 \subset \TC^0$. $\mathsf{ALL}$ is the set of all formal languages over alphabet $\{0, 1\}$. See \autoref{sec:circuits} for formal definitions. Out of these results, we view saturated attention with floats as the best model of practical transformers.}
    \label{tab:my_label}
\end{table}

However, hard attention is a strong assumption, making it unclear how these results transfer to practical transformers. \new{For example, \citet{bhattamishra-etal-2020-ability} showed how transformers can solve synthetic counting tasks by using uniform attention patterns, which hard attention does not allow}. Motivated by this potential disconnect between theory and practice, we aim to extend circuit-based analysis to transformers with saturated attention: a generalization of hard attention that has been argued to approximate attention patterns acquired through gradient descent \citep{merrill2020parameter}. Broadly speaking, saturated attention goes beyond hard attention in that it can ``tie'' across a subset of positions, rather than selecting just \emph{one} position. The tied positions are then aggregated by averaging. Qualitatively, saturated attention heads can ``count'': a capability observed in transformers in practice \citep{bhattamishra-etal-2020-ability}. Further, \citet{merrill2020parameter} show that transformer training dynamics lead attention heads in several pretrained transformers to approximate saturated attention. In summary, saturated attention strictly generalizes hard attention and should more closely reflect the attention patterns acquired in practical transformers.

\new{Our main contributions are twofold. First, we show that saturated transformers can recognize languages outside $\AC^0$.
Then, as depicted in \autoref{tab:my_label}, we prove that transformers with floating point activations and saturated attention can only recognize formal languages in the circuit complexity class $\TC^0$, constituting an upper bound for a more realistic model of transformers than past results with hard attention.}

\section{Roadmap}

\new{In \autoref{sec:definitions}, we formally define our model of the transformer, including defining saturated attention in contrast to hard attention. \autoref{sec:circuits} introduces circuits in theoretical computer science and relevant complexity measures and classes for them.}

In \autoref{sec:unbounded}, we first briefly analyze saturated transformers with rational values \new{where the embedding, scoring, and activation functions are allowed to be any size-preserving function}. We find such transformers to be universally powerful. We also observe that when the positional embeddings are computed in time linear in the sequence length, saturated rational-valued transformers are exactly as powerful as the complexity class of their activation functions, because the full input sequence can be pooled to a single position, and an activation function can be used as an oracle over the full input sequence.
\new{However, this setup relies on the use of unrealistic embedding functions.} To move to a more realistic model of computation, we then focus on saturated transformers whose values are restricted to be \emph{floats}, which \new{have a coarser  granularity and, thus,} cannot encode the full input sequence into a single position.

Building on results of \citet{perez2018on}, we demonstrate in \autoref{sec:beyond-ac0} that saturated transformers with \new{\emph{float}} activations transcend the theoretical limitations of hard-attention transformers. In particular, we will show that they can recognize the majority language, which lies outside $\AC^0$.
We experimentally validate that transformers can learn to recognize the majority language. Taken together, these results suggest that the very weak characterization of hard-attention transformers does not hold \new{in practice for saturated or soft attention.}

In \autoref{sec:complexity-bounded}, we show that, on input sequences of length $n$, the size of each state vector in a transformer \new{over floats} is $\O(\log n)$ bits, similar to saturated LSTMs (cf.~\citealp{merrill-2019-sequential}).
Thus, the full transformer state at any given layer has size $\O(n \log n)$, although each feedforward block can only locally access a small, $\O(\log n)$ ``piece''.
Thus, while hierarchical representations (e.g., to process arbitrary-depth Dyck languages or reverse strings) can be implemented in a transformer, our result implies they must be \emph{distributed} in some way across $n$ state vectors, rather than represented compactly within a single vector.

Finally, in \autoref{sec:simulation}, we use the bounded size of transformer representations to upper bound that formal languages that can be recognized by a saturated transformers \new{with floating-point values}. In particular we show that they can be simulated by constant-depth threshold circuits, i.e., fall in $\TC^0$. Informally, this suggests that moving from hard attention to saturated attention can be thought of as extending the implicit class of circuit gates available in the network to include threshold gates.
% As practical transformers can implement saturated attention, this suggests previously identified weaknesses of hard attention do not extend to practical transformers.

\new{Our results make progress in the analysis of transformers by deriving upper bounds for a more realistic model of transformers than has previously been analyzed. RoBERTa, T5, and other pretrained transformers have been shown to be approximately saturated \citep{merrill2020parameter}, so our results imply $\TC^0$ may be a meaningful upper bound on the computation expressible within such networks. Our analysis also motivates future work further refining the characterization of saturated transformers, as well as comparing transformers with soft and saturated attention.}

% Given \new{these results},
% \soft{we view} $\TC^0$ \soft{as an} upper bound on the capabilities of practical transformers.
% Our results motivate future work placing saturated transformers within the hierarchy of classes related to $\AC^0$ and $\TC^0$, \soft{as well as further separations between different attention variants, such as soft and saturated \citep{merrill-2019-sequential} attention.}
% Our proofs leverage ``sum sets'': a formal object we introduce that may be useful for further theoretical analysis of attention, and perhaps also more generally.

\section{Definitions and Notation} \label{sec:definitions}

\new{
We will often use $w$ to refer to a string over any generic alphabet $\Sigma$, i.e., $w \in \Sigma^*$. Semantically, $w$ corresponds to the string a transformer receives as input. In contrast, we use $x$ and other symbols to refer to binary strings in $\{0, 1\}^*$. These binary strings will represent intermediate values within the transformer computation, rather than the raw input to the transformer.}

\subsection{Datatypes} \label{sec:datatypes}

Under our model, all values in the transformer are binary strings. In order to compute self attention and other operations over binary strings, we need to define datatypes describing the semantics of these binary strings as numbers. We will describe a semantics for binary strings as integers, as often comes up in circuit complexity. We then extend this to rational numbers and floats, which are necessary for representing the division operations that occur in attention heads within transformers.

\paragraph{\new{Unsigned} Integers}
We can interpret binary strings $x \in \{0, 1\}^*$ as \new{unsigned} integers in the standard way, \new{i.e., the numerical value of $x \in \{0, 1\}^n$ is}
\begin{equation*}
    \den{x}_\mathbb{Z} =  \sum_{i=1}^{n-1} 2^{i-1} x_i
\end{equation*}
% \new{where the first bit $x_0$ is taken to represent a sign, and the following bits represent the magnitude. More precisely, the numerical value of $x \in \{0, 1\}^n$ is}
% \begin{equation*}
%     \den{x}_\mathbb{Z} = (2x_0 - 1) \sum_{i=1}^{n-1} 2^{i-1} x_i
% \end{equation*}
We allow standard integer operations like $+_\mathbb{Z}, \cdot_\mathbb{Z}, <_\mathbb{Z}$. For example, ${101} +_\mathbb{Z} {1} = {110}$.
% We call an integer \emph{unsigned} if \new{all $n$ bits are used to encode the magnitude, i.e.,} it does not contain the sign bit $x_0$. \new{Unsigned integers are taken to be positive.}
% Finally, for $x \geq_\mathbb{Z} 0$, we use $10^x_\mathbb{Z}$ to denote $2_{10}$ raised to the power of $x$, e.g., $10^{11} = 1000$.
% We also allow truncated integer division, e.g., $101 /_\mathbb{Z} 10 = 10$.

\paragraph{Rationals}
To interpret $r \in \{0, 1\}^*$ as a rational number, we first view it as \new{a sign bit $s$ along with a tuple of two unsigned integer substrings $\langle p, q \rangle$.}\footnote{Under the hood, we imagine the pair $\langle p, q\rangle$ is encoded by padding $p$ and $q$ to the same length with $0$'s and interweaving bits from each.} The numerical value represented by $r$ is
\begin{equation*}
    \den{r}_\mathbb{Q} = (2s - 1) \den{p}_\mathbb{Z} / \den{q}_\mathbb{Z} .    
\end{equation*}
Let $\mathrm{red}(p, q)$ return $\langle s, t \rangle$ where $s = p / \mathrm{gcd}(p, q)$ and $t = q / \mathrm{gcd}(p, q)$. Then, we can define arithmetic operations over two rationals $r = \langle p, q \rangle$ and $r' = \langle p', q' \rangle$ in the standard way:
\begin{align*}
    r +_\mathbb{Q} r' &= \mathrm{red}(p \cdot_\mathbb{Z} q' + p' \cdot_\mathbb{Z} q,\, q \cdot_\mathbb{Z} q') \\
    r \cdot_\mathbb{Q} r' &= \mathrm{red}(p \cdot_\mathbb{Z} p',\, q \cdot_\mathbb{Z} q') .
\end{align*}

\paragraph{Floats}
We define floats $\mathbb F$ as the subset of the rationals where the denominator is constrained to be a power of $2$.\footnote{More generally, the denominator may be taken to have a prime factorization of bounded length, although we work with the power of $2$ definition, which is both simpler and closely resembles conventional floating point datatypes.} Multiplication and addition are defined as for $\mathbb Q$, and are guaranteed to produce another float. Notably, division for floats is implemented by multiplying by an approximate multiplicative inverse, so it may be that $(x /_\mathbb{F} y) \cdot_\mathbb{Q} y \neq x$. See \autoref{sec:floats} for a more formal discussion.

In \autoref{sec:unbounded}, we will study transformers over rational values.
\new{From \autoref{sec:beyond-ac0} onwards}, we will then take the values in transformers to be floats unless otherwise stated.
Going forward, we will generally omit datatype subscripts from operations where they are clear from context.
\new{We will sometimes write $\mathbb D$ as a set in function signatures, e.g., $f : \mathbb D^k \to \mathbb D^k$. In this usage, it refers to the set $\{0, 1\}^*$, but it is often more intuitive to write the datatype shorthand (rather than $\{0, 1\}^*$) to hint at the intended semantics of the functional arguments.}

\new{
\paragraph{Size of Binary Strings} Under our model, integers, rationals, and floats are all abstractions built out of binary strings. For any $x \in \{0, 1\}^*$ (which can be interpreted semantically as an integer, float, or rational), we define its size $\abs{x}$ as the total length of $x$ measured in bits.
\new{We imagine a tuple $\langle p, q\rangle$ is encoded by padding $p,q$ to the same length with leading $0$'s, and interleaving bits from each sequence. This means the size of a rational is $2\max(\abs{p}, \abs{q}) + 1$.}
For example, the integer $2$ takes $2$ bits to specify, while the float $\frac{1}{2}$ takes $5$ bits ($1$ for the sign, $2$ for the numerator, $2$ for the denominator).
}

\new{
\paragraph{Size Preservation} 
We say that a function $f : \{0, 1\}^* \to \{0, 1\}^*$ is
% For a datatype $\mathbb D$, we also define the function class $\mathcal P(\mathbb D)$ of \emph{size-preserving} functions. A function $f : \mathbb{D}^m \to \mathbb{D}^k$ is 
size-preserving iff there exist constants $c, n$ such that for all inputs $x$ with $n \leq \abs{x}$, $\abs{f(x)} \leq c \cdot \abs{x}$.
Let $\mathcal P$ be the set of size-preserving functions.
While size-preserving functions are defined here over binary strings, they can be equivalently applied over integers, rationals, and floats, since these datatypes, as we have defined them, are just binary strings.
}

\subsection{Transformers} \label{sec:transformers}

% To formally relate transformers to circuits, will define transformers as functions over sequences of binary strings, which represent numerical vectors. The numerical semantics of a binary string will depend on a \emph{datatype} $\mathbb{D}$, and the transformer's \emph{activation functions} will map $\{0, 1\}^* \to \{0, 1\}^*$. We represent these properties of the architecture as parameters in the following definition.

\new{We define the following general transformer model, which can be parameterized to use different types of attention patterns and whose internal functions (e.g., feedforward blocks) can be computed by different function classes.}

\begin{definition}[Transformer]
A \emph{transformer} is a tuple  \new{$\langle \Sigma, \mathbb{D}, \alpha, L, H, \phi, \{s_{\ell,h}\}_{\ell,h=1}^{L,H}, \{f_\ell\}_{\ell=1}^L \rangle$} where
\begin{compactenum}
    \item $\Sigma$ is a finite input alphabet, i.e., the set of token types in a formal language.
    \item $\mathbb{D}$ is a scalar datatype, i.e., a semantics for interpreting binary strings as numbers. We will generally consider $\mathbb D = \mathbb F$.
    \item \new{$\alpha$ is an attention function that maps a vector of attention scores in $\mathbb{D}^n$ (for any $n$) to a normalized probability distribution, also in $\mathbb{D}^n$. In this paper we take $\alpha$ to be either hard ($\eta$) or saturated ($\zeta$) attention; see \S\ref{sec:attention-functions}.}
    \item \new{$L \in \mathbb N$ is the number of layers.}
    \item \new{$H \in \mathbb N$ is the number of heads.}
    \item \new{$\femb : \Sigma \times \mathbb{N} \to \mathbb D^m$ is a position-aware embedding function that maps a token and position to a vector, where $m$ is a multiple of $H$.}
    \item \new{For each $\ell,h$, the function $s_{\ell,h} : \mathbb D^m \times \mathbb D^m \to \mathbb D$ assigns attention scores to pairs of values.}
    \item \new{For each $\ell$, the function $\fact : \mathbb D^m \times \mathbb D^m \to \mathbb D^m$, maps a previous layer value and attention head output to a new value vector.}
\end{compactenum}
\end{definition}

On an input string $w \in \Sigma^n$, a transformer computes $L$ layers of output sequences $v_{\ell,1}, \cdots, v_{\ell,n}$ (for $\ell \leq L$), where each \new{$v_{\ell, i} \in \mathbb D^m$}.
In the $0$th layer, each token $w_i$ and its position $i$ are embedded into a value $v_{0,i}$.
Subsequent layers aggregate information from the previous value sequence $v_{\ell}$ using a \emph{multi-head attention mechanism}, and output a new value sequence $v_{\ell+1}$.
More formally, these layers are structured as follows:

\begin{compactenum}
    \item \new{\textbf{Embedding Layer:} $v_{0,i} = \femb(w_i, i)$.
    \item \textbf{Attention Head:} Each of the $H$ attention heads in layer $\ell$ maps the full previous sequence into a new value via $\fattn_{\ell,h}$ and then applies the attention function $\alpha$:
    \begin{align*}
        a_{\ell,h,i,j} &= \fattn_{\ell,h}(v_{\ell,i}, v_{\ell,j}) \\
        b_{\ell + 1,h,i} &= \sum_{j=1}^n \alpha(a_{\ell,h,i,:})_j \cdot v_{\ell,j} .
    \end{align*}}
    Crucially, the semantics for addition and multiplication here (as well as in the computation of $\alpha$) come from the datatype $\mathbb D$.
    \item \textbf{Activation Block:}\footnote{Let $V_{\ell,h}$ be a head's value matrix in the standard transformer parameterization. Then $\fact_\ell$ is computed by first multiplying each $b_{\ell,h,i}$ by $V_{\ell,h}$, aggregating the multiple attention heads, and applying the feedforward subnetwork.}
    \begin{equation*}
        \new{v_{\ell + 1, i} = \fact_{\ell + 1}(v_{\ell, i}, b_{\ell, :, i}) .}
    \end{equation*}
    % The \emph{activation block} is a function $\fact \in \Fact$ abstracting the value matrix, feedforward subnetwork, and multihead aggregator in a standard transformer. The output is computed as a position-wise function of the input and all head outputs:
\end{compactenum}

\subsection{Attention Functions} \label{sec:attention-functions}

An attention function $\alpha$ maps a vector of scores \new{$a \in \mathbb{D}^n$} to a probability distribution over $1, \cdots, n$. Specifically, we consider two attention functions: \emph{hard} attention $\eta(a)$ and \emph{soft} attention $\zeta(a)$.

Hard attention collapses the attention scores to a one-hot distribution with all mass concentrated at one index. Let $\mathcal M(a) = \{ i \mid a_i = \max_j a_j \}$.
\begin{definition}[Hard attention] \label{def:hard}
Define hard attention $\eta(a)$ as
\begin{equation*}
    \eta(a)_j =
    \begin{cases}
        1 & \textrm{if} \; j = \min_{m \in \mathcal M(a)} m \\
        0 & \textrm{otherwise.}
    \end{cases}
\end{equation*}
\end{definition}
In contrast, saturated attention spreads probability mass evenly across ``tied'' scores.
% If multiple indices maximize the attention score, they are averaged together.
\begin{definition}[Strong saturated attention; \citealt{merrill2020parameter}]
Define saturated attention $\zeta(a)$ as
\begin{equation*}
    \zeta(a)_j = \frac{1}{\abs{\mathcal M(a)}} \cdot
    \begin{cases}
        1 & \textrm{if} \; j \in \mathcal M(a) \\
        0 & \textrm{otherwise.}
    \end{cases}
\end{equation*}
\end{definition}
\citet{merrill-2019-sequential} shows how this form of attention can be derived by taking a large-norm limit of the network weights; a derivation can be found there.
Saturated attention reduces to hard attention when $\abs{\mathcal M(a)}=1$, and attends uniformly when $\abs{\mathcal M(a)} = n$. Both hard and uniform attention can be implemented with numerical stability, motivating \emph{weak} saturated \new{(or, ``uniform'')} attention:
\begin{definition}[Weak saturated attention]
Each head implements either hard attention (\autoref{def:hard}) \emph{or} the uniform pattern $\upsilon(a)_j = \frac{1}{n}$.
\end{definition}
In general, we will use ``saturated attention'' to refer to strong saturated attention and provide upper bounds for this setting. On the other hand, our lower bounds only use weak saturated attention, thereby showing that even weak saturated attention is more powerful than hard attention.

\subsection{Language Recognition} \label{sec:lang-rec}

\new{Finally, we define language recognition for transformers.}

\begin{definition}[Language recognition] \label{def:recognition}
Write $v_{\ell,i}(w)$ for the value of $v_{\ell,i}$ on input string $w$.
A transformer recognizes a language \new{$\mathcal L \subseteq \Sigma^*$} if there exists a \new{$\mathbb D$-valued} affine transformation $W, b$ such that, for all $w \in \Sigma^*$,
\begin{equation*}
    W \cdot v_{L,1}(w) + b > 0 \iff w \in \mathcal L .
\end{equation*}
\end{definition}
This says the decision problem of recognizing $\mathcal L$ must be linearly separable using the first value in the last layer of the transformer. In practice, the first token in a transformer is often set to \texttt{CLS}, and its output can be passed to a classifier during finetuning \citep{devlin-etal-2019-bert}. This inspires \autoref{def:recognition}.
% \roy{relating to one of my comments above, do you taking the 1st index to simulate taking the CLS vector?}
% This definition of language recognition is inspired by the linear classifiers often augmented to transformers during finetuning.
There are other potential ways to define language recognition and generation for transformers \citep{hewitt-etal-2020-rnns, yao2021selfattention}, but they do not lead to meaningful differences for our purposes.

\new{Finally, we define $\trans(\mathbb D)$ as the set of languages recognizable by some saturated transformer over $\mathbb D$, where the internal functions can be any size-preserving function.\footnote{\new{The name $\trans$ standards for ``averaging hard attention transformer'', and is taken from \citet{angluin2021}.}}}
\begin{definition} \label{def:trans-class}
\new{
Let $\trans(\mathbb D)$ be the set of languages $\mathcal L$ such that there exists a transformer $\langle \Sigma, \mathbb D, \zeta, L, H, \phi, s_{\ell,h}, f_\ell \rangle$ that recognizes $\mathcal L$ where each $\phi, s_{\ell,h}, f_\ell \in \mathcal P$.\footnote{\new{To apply size preservation to the embedding function $\phi$, we consider the size of a token to be $\log(\abs{\Sigma})$.}}
}
\end{definition}
\new{We note that size preservation is a weak condition to assume about the internal functions in practical transformers: since any linear-time-computable
% \ashish{works for any poly time computable function too, right? if so, better to use that?}
function is size-preserving, it is strictly weaker than assuming the internal functions can be computed in linear time. To further justify this condition, we explicitly show in \autoref{sec:justifying-l} that the component functions within transformers are size-preserving.}

\new{\section{Circuit Complexity} \label{sec:circuits}} 

Circuit complexity is a branch of computational complexity theory that studies circuit families as a model of computation.\footnote{\new{For more reference material on circuit complexity, \new{we refer the reader to chapters 6 and 14 of \citet{arora2009computational} or chapters 1 and 2 of the \emph{Handbook of Theoretical Computer Science}, Volume A \citep{chapter1, chapter2}.}}}
\new{Intuitively, circuits are useful for formally studying the types of computational problems that can be efficiently solved with parallelism, as the depth of a circuit corresponds to the runtime of a program on an idealized, fully parallel computer.}
We review background on circuits, circuit families, and relevant complexity measures and classes.

\paragraph{Circuits} For a fixed $n$, a \emph{circuit} is a computation graph, where leaves correspond to input bits $x_i$ and their negations $\neg x_i$, and the internal nodes are logic gates (typically $\wedge$ and $\vee$), with one labeled as the output node.
The gates can conventionally be taken to have either binary or unbounded fan-in.
The circuit computes a function $f : \{0, 1\}^n \to \{0, 1\}$ by substituting the input values into the leaf nodes, propagating the computation through the graph, and returning the value of the output node.
\autoref{fig:example-circuit} shows an example circuit that takes inputs of length $5$, and returns whether they contain the bigram $11$.

\begin{figure}[t!]
    \centering
    \includegraphics[width=0.8\columnwidth]{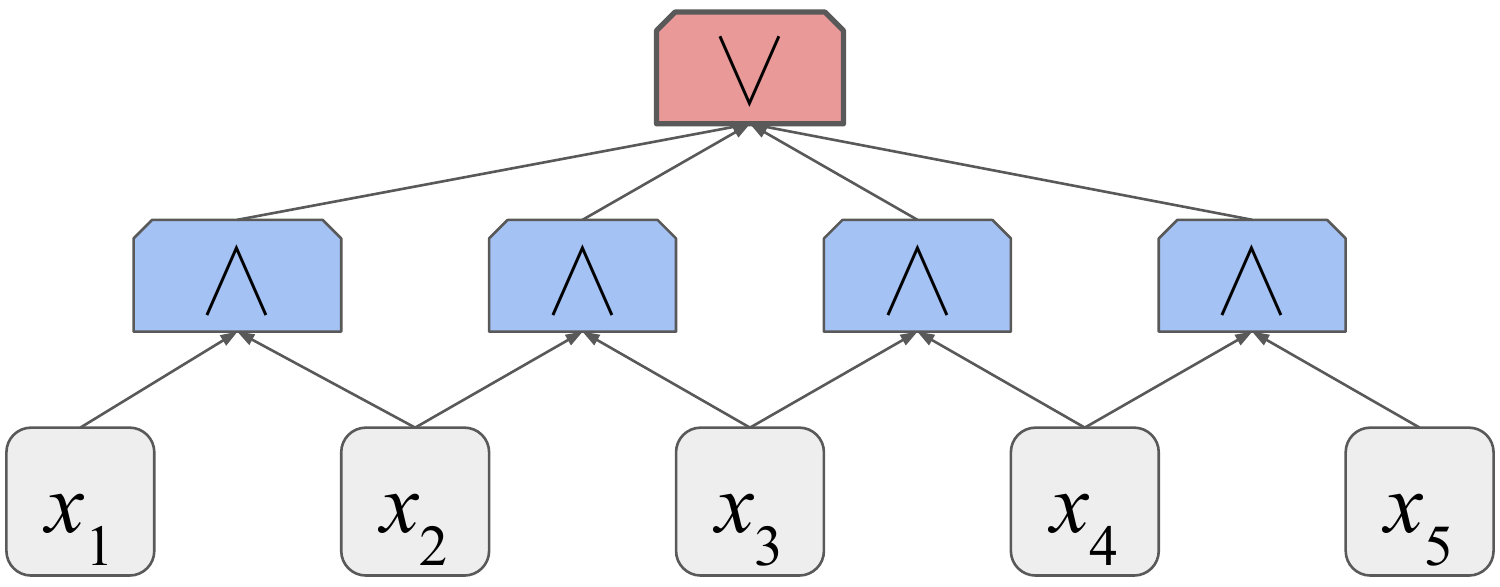}
    \caption{A circuit that takes a string $\in \{0,1\}^5$ and returns whether it contains the bigram $11$.}
    \label{fig:example-circuit}
\end{figure}

\paragraph{Circuit Families} A \emph{circuit family} is an ordered set of circuits $\{ C_n \}_{n \in \mathbb{N}}$ where each circuit is identified with a particular input size $n$. We say a circuit family recognizes a formal language $\mathcal L \subseteq \{0, 1\}^*$ iff, for all $w \in \mathcal L$,\footnote{Similarly, for any alphabet $\Sigma$ and $\mathcal L \subseteq \Sigma^*$, we interpret $w_i$ as a one-hot vector over $\Sigma$ and define the family to recognize $\mathcal L$ iff, for all $w \in \mathcal L$, $C_{\abs{w} \cdot \abs{\Sigma}}(w) = 1 \iff w \in \mathcal L$.}
\begin{equation*}
    C_{\abs{w}}(w) = 1 \iff w \in \mathcal L .
\end{equation*}

% \paragraph{Uniformity} A circuit family is a \emph{non-uniform} model of computation, meaning it can behave completely differently for differently sized inputs. Non-uniformity is thought to be an unnatural property, as it enables recognition of certain uncomputable languages. We will thus sometimes prefer to talk about \emph{uniform} circuit families, meaning there exists an algorithm to efficiently construct the circuit $C_n$ given some input size $n$. Formally, a circuit family is uniform if there exists a Turing machine that maps $1^n \mapsto C_n$ using log space.

\paragraph{Circuit Complexity}
Two important notions of complexity for a circuit are its size and depth. The size of a circuit is the number of gates. The depth is the longest path from an input node to the output node. For a circuit family, both quantities can be expressed as functions of the input size $n$. A \emph{circuit complexity class} is a set of formal languages that can be recognized by circuit families of a certain size, depth, and set of gates. In particular, we will discuss the classes $\AC^0$ and $\TC^0$.
\begin{definition}
$\AC^0$ is the set of languages $\mathcal L \subseteq \{0, 1\}^*$ such that there exists a circuit family recognizing $\mathcal L$ with unbounded arity $\{\wedge, \vee\}$ gates, $\poly(n)$ size, and $\O(1)$ depth.
\end{definition}
\new{Intuitively, $\AC^0$ represents the class of problems that are highly parallelizable when the computational primitives are standard logic gates. In contrast, $\TC^0$ will also represent highly parallelizable computation, but when the gates are expanded to include \emph{threshold gates}.}
% \will{Can provide a simpler definition with majority gates instead of threshold}
For a bitstring $x \in \{0,1\}^*$, define the threshold gate $\theta_{{\geq}k}(x)$ to return $1$ iff $\geq k$ bits in $x$ are $1$, and equivalently for $\leq k$.
For example, $\theta_{{\geq}3}(110011) = 1$.
\begin{definition}
$\TC^0$ is the set of languages $\mathcal L \subseteq \{0, 1\}^*$ such that there exists a circuit family recognizing $\mathcal L$ with unbounded arity $\{\wedge, \vee, \theta\}$ gates, $\poly(n)$ size, and $\O(1)$ depth.
\end{definition}
% $\AC^0$ represents computations that are highly parallelizable with access to standard logic gates (very loosely, on a multicore CPU), whereas $\TC^0$ represents computations that are highly parallelizable with standard logic gates \emph{and} threshold gates (very loosely, with a multicore CPU and GPU, which can compute ).

It is known that $\AC^0 \subset \TC^0 \subseteq \NC^1$, where $\NC^1$ denotes the languages recognizable by $\O(\log n)$-depth circuits with bounded gate arity. \new{Whether or not the latter containment between $\TC^0$ and $\NC^1$ is strict is an open question.}
Whereas parity and other basic regular languages are outside $\AC^0$ \citep{furst81parity}, $\TC^0$ properly contains parity, although it is unknown whether it contains \emph{all} the regular languages. Between $\AC^0$ and $\TC^0$ lies the class $\mathsf{ACC}^0$ \citep{yao1990acc}.

\new{
\paragraph{Uniformity} The circuit classes defined above (and which we will use in this paper) are \emph{non-uniform}, meaning circuits for different input sizes are not constrained to have any relation to each other. Non-uniform circuit families can recognize some uncomputable languages, such as the language of strings $1^k$ such that Turing machine $k$ does not halt on the null input \citep[cf.][]{arora2009computational}. In contrast, the \emph{uniform} variants of circuit families are constrained such that a log-space Turing machine must output a string encoding of circuit $C_n$ on the input string $1^n$, forcing any language the circuit family can recognize to be computable. For these uniform classes (which we write with a \emph{u} prefix), it is known that
\begin{equation*}
    \u \TC^0 \subseteq \u \NC^1 \subseteq \mathsf L \subseteq \mathsf P ,
\end{equation*}
where $\mathsf L$ and $\mathsf P$ denote the conventional complexity classes of log-space and polynomial-time decision problems. Thus, it is unknown whether $\u \TC^0$ is restricted compared to general polynomial-time computation, but if we accept the common conjecture that one (if not all) of the above containments are strict, then $\u \TC^0$ forms a restricted family of problems compared to $\mathsf P$, which, intuitively, are more parallelizable than other problems in $\mathsf P$.
}

\section{Aren't Transformers Universal?} \label{sec:unbounded}
% We first consider saturated transformers where the embedding and activation functions can be arbitrary polynomial-time functions.
\new{We now begin our analysis of saturated transformers.}
\citet{angluin2021} and \citet{hahn-2020-theoretical} were able to give upper bounds on the power of hard attention without imposing any constraints on the embedding, \new{scoring}, and activation functions. The same will not be the case \new{with saturated} attention:
any bounds on transformers will require leveraging \new{some properties constraining their internal functions}.
\new{One property we use} will be size preservation.
We will first show though that size preservation is not enough on its own: deriving a nontrivial upper bound will depend on subtle assumptions about the transformer's datatype.

\new{With rational values and size-preserving internal functions}, \new{we will show} saturated transformers can recognize \emph{any} formal language, i.e., the class $\mathsf{ALL} = \{ \mathcal L \mid \mathcal L \subseteq \{0, 1\}^* \}$.
Our construction resembles the universal approximation construction of \citet{Yun2020Are}, which relies on the ability of the transformer to uniquely encode the full input string into a single value vector.
After the full sequence is encoded locally into a single vector, the activation block can be used as a black box to recognize any language.
\begin{restatable}{proposition}{rationals} \label{thm:rationals}
    $\trans(\mathbb Q) = \mathsf{ALL}$.
\end{restatable}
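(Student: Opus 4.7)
The inclusion $\trans_s(\mathbb Q) \subseteq \textsc{All}$ holds trivially, so the task reduces to showing that every $L \subseteq \{0,1\}^*$ is recognized by some saturated transformer over $\mathbb Q$. The plan is to build a one-layer model in which uniform attention aggregates the entire input into a single rational at every position, and then to leverage the fact that the size-preserving activation class $\mathcal S$ is closed under arbitrary (even uncomputable) functions whose output is not much longer than their input.

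The central obstacle is that the embedding $\phi \in \mathcal S$ must satisfy $\abs{\phi(w_i, i)} = O(\log i + \log \abs{\Sigma})$, yet the uniform average $(1/n) \sum_j \phi(w_j, j)$ produced by saturated attention must still injectively encode the raw sequence $w$. The plan is to resolve this tension with a prime-based encoding. Let $p_i$ denote the $i$-th prime; by the prime number theorem $p_i = O(i \log i)$, so $1/p_i$ fits in $O(\log i)$ bits, and the embedding $\phi(w_i, i) = \langle w_i / p_i,\, 1 \rangle \in \mathbb Q^2$ lies in $\mathcal S$. A single attention head with a constant scoring function makes every score tied, so saturated attention is uniform and the head output at each position $i$ is
\begin{equation*}
    b_{1,1,i} = \bigg\langle \frac{1}{n} \sum_{j=1}^n \frac{w_j}{p_j},\ \frac{1}{n} \bigg\rangle.
\end{equation*}
From the second coordinate one reads off $n$; multiplying the first coordinate by $n$ recovers $S = \sum_j w_j / p_j$, whose reduced-form denominator equals $\prod_{j : w_j = 1} p_j$. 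Unique prime factorization then lets us recover each bit $w_j$ from $(S, n)$.

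All that remains is to set the activation function $\fact_1 : \mathbb Q^2 \to \mathbb Q^2$ to decode $w$ from its input as above and output $\langle \mathbb{1}[w \in L],\, 0 \rangle$. Since $\mathcal S$ does not require computability and the input to $\fact_1$ has size $\Omega(n)$ while its output has size $O(1)$, we have $\fact_1 \in \mathcal S$. Then the affine classifier $W = (1, 0)$, $b = -1/2$ verifies \autoref{def:recognition}. The only conceptually delicate step is engineering the prime-based embedding so as to remain size-preserving while retaining injectivity under averaging; once that is in place, the rest of the construction follows directly from the permissiveness of $\Phi = \Fact = \mathcal S$ combined with the ability of a single rational to hold arbitrarily many bits.
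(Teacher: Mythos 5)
Your proposal is correct and takes essentially the same approach as the paper's proof: a prime-reciprocal embedding shown size-preserving via the prime number theorem, a single uniform attention head aggregating $\sum_{w_j = 1} 1/p_j$, recovery of $w$ from the primes dividing the denominator, and a black-box size-preserving activation deciding membership in $L$. Your second coordinate carrying $1/n$ is only a minor tidying of the paper's treatment of the uniform-averaging factor.
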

\begin{proof}
We construct a $1$-layer rational-valued transformer with a single head to recognize every string $w$ in any formal language $\mathcal L \in \mathsf{ALL}$. We will omit $\ell,h$ subscripts. Let $p_i$ denote the $i$th prime number. The embedding layer encodes each input token according to
\begin{equation*}
    \femb(w_i, i) =
    \begin{cases}
        1 / p_i & \textrm{if} \; w_i = 1 \\
        0 & \textrm{otherwise.}
    \end{cases}
\end{equation*}
\new{Since $p_i \sim i \log i$ for large $i$ by the prime number theorem \citep[cf.][]{goldstein1973history}, the number of bits needed to represent $\femb(w_i, i)$ is
\begin{align*}
    \leq c \log( i \log i) \leq c \log(i^2) = 2c \log i.
\end{align*}
Since $i$ had size $\log i$, this implies $\femb$ is size-preserving.}

Now, we define a single uniform attention head that sums across all $i$, outputting $\sum_{w_i=1} \frac{1}{p_i}$. The denominator $q$ of this sum is the product $\prod_{i=1} p_i$. 
Observe that $w_i = 1$ if and only if $p_i$ divides $q$.
Thus, we can define a function $f$ that extracts the input sequence $w$ from $q$ by checking whether, for each $i$, $p_i$ divides $q$. We let $g$ be a function recognizing $L$, and set $\fact = g \circ f$. \new{The output of the transformer will now compute whether $w \in \mathcal L$, since $f$ outputs an encoding of the original input sequence $w$, and $g$ decides whether $w \in \mathcal L$.}
Note that any function solving a decision problem is size-preserving, hence $\fact \in \mathcal P$.
\end{proof}

\autoref{thm:rationals} says that our transformer architecture parameterized with a rational datatype can recognize any formal language.
But a construction of this form feels unrealistic for two reasons. First, it requires the embedding layer to implement an unconventional prime encoding scheme in the embedding layer.
Second, we are using the activation layer as a black box to recognize any language---even uncomputable ones! On the other hand, the feedforward subnetworks used in practice in transformers cannot even implement all computable functions when the weights are fixed independent of the sequence length $n$.
We can get around both these issues by instead restricting the datatype to floats, which is the direction we will pursue in the remaining sections.\footnote{It may also be possible to derive tighter bounds for rational-valued transformers by imposing stronger constraints on \new{the internal functions}. However, with floats, we will see that size preservation is sufficient to derive a tighter characterization of transformers' power. We leave this alternate direction to future work.}

\subsection{Resource-Bounded Transformers}
\label{subsec:resource-bounded}

In \autoref{sec:resource-bounded}, we develop an alternate perspective on the universality of transformers, \new{showing that, if the embedding function is allowed to be computed in time linear in the sequence length, then the transformer's complexity is equivalent to its activation functions' complexity.}
\begin{proposition}[Informal] \label{thm:resource-bounded-informal}
 \new{If $\phi$ can be any function computable in time linear in $n$, and the scoring and activation functions can be computed in $T(m)$ time on inputs of size $m$ with $T(m) \geqslant m$, then languages recognizable by the transformer are $\mathsf{TIME}(T(m))$.}
\end{proposition}
\autoref{sec:resource-bounded} contains a formal statement and proof.
For example, \new{allowing polynomial-time functions inside the transformer} implies that the transformer will recognize exactly \new{the complexity class $\mathsf{P}$}.
\new{A major unrealism about this setup is the assumption that $\phi$ can be an arbitrary function computable in time linear in $n$, motivating our main results in a more constrained setting in \autoref{sec:simulation}.}
% However, this universality still depends on the unrealistic property that the embeddings are computed in time linear in $n$, motivating the rest of our main results.

\new{
\subsection{Discussion}
We are not stating the results in this section as evidence that practical transformers are capable of universal or arbitrary polynomial computation. Rather, the unnaturalness of these constructions (specifically, the prime numbers based position encoding) motivates us to slightly constrain our model of the transformer in a realistic way: we will switch the datatype from rationals to floats, since even using only simple uniform attention, a model with rationals and unconstrained internal functions is universal. We will soon see that this realistic constraint prevents universal simulation, and in fact bounds the capacity of the saturated transformer within $\TC^0$.}

% % This result formalizes the idea that, with very powerful activation blocks, the saturated transformer can simply aggregate all the inputs into a single representation, and then use $\fact$ as a black box to compute any function.  This is unrealistic, as the embedding and feedforward components in practical transformers always run in time sublinear in $n$. To make progress, we will make additional assumptions about the complexity of $\fact$ that were not necessary to show hard-attention transformers fall within $\AC^0$ \citep{angluin2021}.\roy{I think the last comment might get lost. I think what you want to say is two things: 1. we want to focus on realistic setups. 2. Our analysis further shows the difference between hard and sat attention, as the former was restricted even without these assumptions. If this is the case, I would say this more explicitly.} §]

% \will{Need to merge back:}
% \begin{restatable}[Universality]{proposition}{universality} \label{thm:approx}
% $\trans(\mathcal P) = \mathrm P$.
% % \begin{equation*}
% %     \trans(\mathcal P) = \mathrm P .
% % \end{equation*}
% \end{restatable}
% % \begin{proof}
% % In \autoref{sec:universality-proof}.
% % \end{proof}
% The proof is in \autoref{sec:universality-proof}.

\section{Beyond Hard Attention\new{, with Floats}} \label{sec:beyond-ac0}

% \citet{angluin2021} show that hard-attention transformers can only recognize languages within $\AC^0$.
% % In contrast, we will show a language \emph{outside} $\AC^0$ that saturated transformers can recognize with $\mathcal L$ activations.
% The argument of \citet{angluin2021} relies on counting the number of bits needed to represent the values of the attention heads and activations, which they find to be $\O(\log n)$. This implies the feedforward blocks in the network, as well as the hard attention mechanisms, can be simulated by $\AC^0$ subcircuits. Thus, an $\AC^0$ circuit also suffices to simulate the whole network.

We now move to the setting of saturated transformers over floats.
\citet{angluin2021} identified that hard-attention transformers can only recognize languages within $\AC^0$.
In contrast, saturated transformers over floats can recognize the ``majority'' language $\maj$, which is known to lie outside $\AC^0$ \citep{furst81parity}.
\citet[][Prop.~3.3]{perez2018on} show how $\maj$ can be recognized by transformers. In \autoref{thm:majority}, we offer a simpler construction that leverages \new{only a single uniform attention head}, as opposed to the model of transformers they were considering.
\new{Thus, this construction is achievable with saturated attention.}

\begin{figure}
\centering
\begin{lstlisting}[language=RASP]
frac0 = aggregate(
    select_all,
    indicator(tokens == 0));
frac1 = aggregate(
    select_all,
    indicator(tokens == 1));
maj = frac1 > frac0;
\end{lstlisting}
\caption{A program recognizing $\maj$ in RASP, a programming language designed to abstract away details of transformer computation \citep{weiss2021thinking}. \texttt{frac\{0,1\}} measure the fraction of inputs that are $0$ or $1$. Then \texttt{maj} checks whether \texttt{frac1 > frac0}.}
\label{fig:majority-code}
\end{figure}

\begin{proposition} \label{thm:majority}
    % Let $\maj = \big\{ w \in \{0, 1\}^+ \mid \#_1(w) > \#_0(w) \big\}$. Then, $\maj \in \trans(\mathbb F)$.
    $\trans(\mathbb F) \not\subseteq \AC^0$.
\end{proposition}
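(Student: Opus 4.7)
The plan is to appeal to the classical circuit lower bound that $\maj \notin \AC^0$, so it suffices to exhibit a weak saturated transformer over floats that recognizes $\maj$. The construction is a direct formalization of the RASP program in Figure~\ref{fig:majority-code}: a single layer with two heads whose attention is the uniform pattern $u$ (admissible under weak saturated attention) to compute the fractions of $0$s and $1$s in the input, followed by a linear classifier comparing them.

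Concretely, let the embedding be $\femb(w_i,i) = \bigl(\mathbf{1}[w_i{=}0],\, \mathbf{1}[w_i{=}1]\bigr) \in \mathbb{F}^2$, which is trivially size-preserving. Use two weak-saturated heads, each with attention function $\alpha = u$; head $b \in \{0,1\}$ projects onto the $b$th coordinate of $v_{0,j}$, so its output at every position is $\tilde u_n \cdot \#_b(w)$, where $\#_b(w)$ counts occurrences of $b$ in $w$ and $\tilde u_n$ is the float returned by the uniform pattern in place of $1/n$. Take $\fact$ to be the identity on its two head-output arguments (size-preserving), and the linear readout $Wv + c$ to be $v_2 - v_1$ with a tiny negative bias $c$ that breaks ties on even-length inputs in whichever way $\maj$ is defined to treat them. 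Since both head outputs are scaled by the same positive float $\tilde u_n$, the sign of the readout at position $1$ of the final layer matches the sign of $\#_1(w) - \#_0(w)$, so the transformer accepts exactly $\maj$.

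The main subtlety, and the only real obstacle, is that $\mathbb F$ only allows denominators that are powers of $2$, so $1/n$ is not literally in $\mathbb F$ for general $n$; the uniform pattern must be read as producing a float approximation $\tilde u_n$. This is harmless here because both heads are scaled by the \emph{same} positive float, which factors out of the sign comparison exactly, reducing it to the integer comparison $\#_1(w)$ versus $\#_0(w)$. Everything else --- size preservation of the embedding, scoring, and activation functions, and linear separability in the sense of Definition~\ref{def:recognition} --- is routine bookkeeping.
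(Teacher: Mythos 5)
Your construction is essentially the paper's: uniform (weak saturated) attention computes the token fractions, and the result then follows from the classical lower bound $\maj \notin \AC^0$; splitting the count into two scalar heads rather than one vector-valued head, and moving the comparison from $\fact$ into the linear readout, are cosmetic differences that the paper itself anticipates in its remark on the standard parameterization. Your observation that $1/n$ need only be a positive float approximation $\tilde u_n$, which factors out of the sign comparison, is a legitimate point the paper's proof glosses over. One small slip: the ``tiny negative bias'' should just be $c=0$ --- since $\maj$ is defined by strict inequality, a tie already yields readout $0 \not> 0$ and is correctly rejected, whereas any fixed $c<0$ would wrongly reject strings with $\#_1(w)-\#_0(w)=1$ once $n$ is large enough that $\tilde u_n < \abs{c}$.
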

\begin{proof}
Let $\#_\sigma(w) \in \mathbb{N}$ denote the number of $\sigma$ tokens in string $w \in \{0,1\}^*$. Let $\#(w)$ denote a count vector where each element corresponds to some $\sigma \in \{0, 1\}$. We define $\maj$ as follows:
\begin{equation*}
    \maj = \big\{ w \in \{0, 1\}^+ \mid \#_1(w) > \#_0(w) \big\} .
\end{equation*}
We will construct a $1$-layer transformer with a single head to recognize $\maj$, omitting $\ell,h$ subscripts from $\fattn, \fact, x, b$. \autoref{fig:majority-code} gives the same construction in RASP \citep{weiss2021thinking}.

Let $x_i = \femb(w_i, i)$ be a $1$-hot encoding of $w_i$. For all $i,j$, set $\fattn(x_i, x_j) = 1$, resulting in a single head attending everywhere:
\begin{equation*}
    b_i = \frac{\#(w)}{n} .
\end{equation*}
Finally, set $\fact(b_i)$ to return whether $\#_1(w) / n > \#_0(w) / n$, which, for $n > 0$, is true iff $w \in \maj$.
% \nascomment{I think you mean $b_{i,1} > b_{i,0}$}.
% , which is a linearly separable function with respect to $b_i$.
\end{proof}
Notably, the construction in \autoref{thm:majority} is not just possible within our generalized transformer framework, but can also be implemented by the standard parameterization of $\femb, \fattn$, and $\fact$ in real transformers \citep{vaswani2017attention}. The uniform attention pattern can be implemented by setting all query and key attention parameters to $0$. Then, we can use the affine transformation that aggregates the head outputs to compute the tuple:
\begin{equation*}
    \left \langle \frac{\#_1(w) - \#_0(w)}{n}, 0 \right \rangle .
\end{equation*}
This tuple is then passed through layer normalization \citep{ba2016layer}, resulting in a new tuple $\langle t_1, t_2 \rangle$. Crucially, $t_1 > t_2$ if and only if the same applies to the quantities in the original tuple. Thus, a linear classifier can decide whether $t_1 > t_2$ to successfully recognize the language, as per \autoref{def:recognition}.

\begin{figure}
    \centering
    \includegraphics[width=\columnwidth]{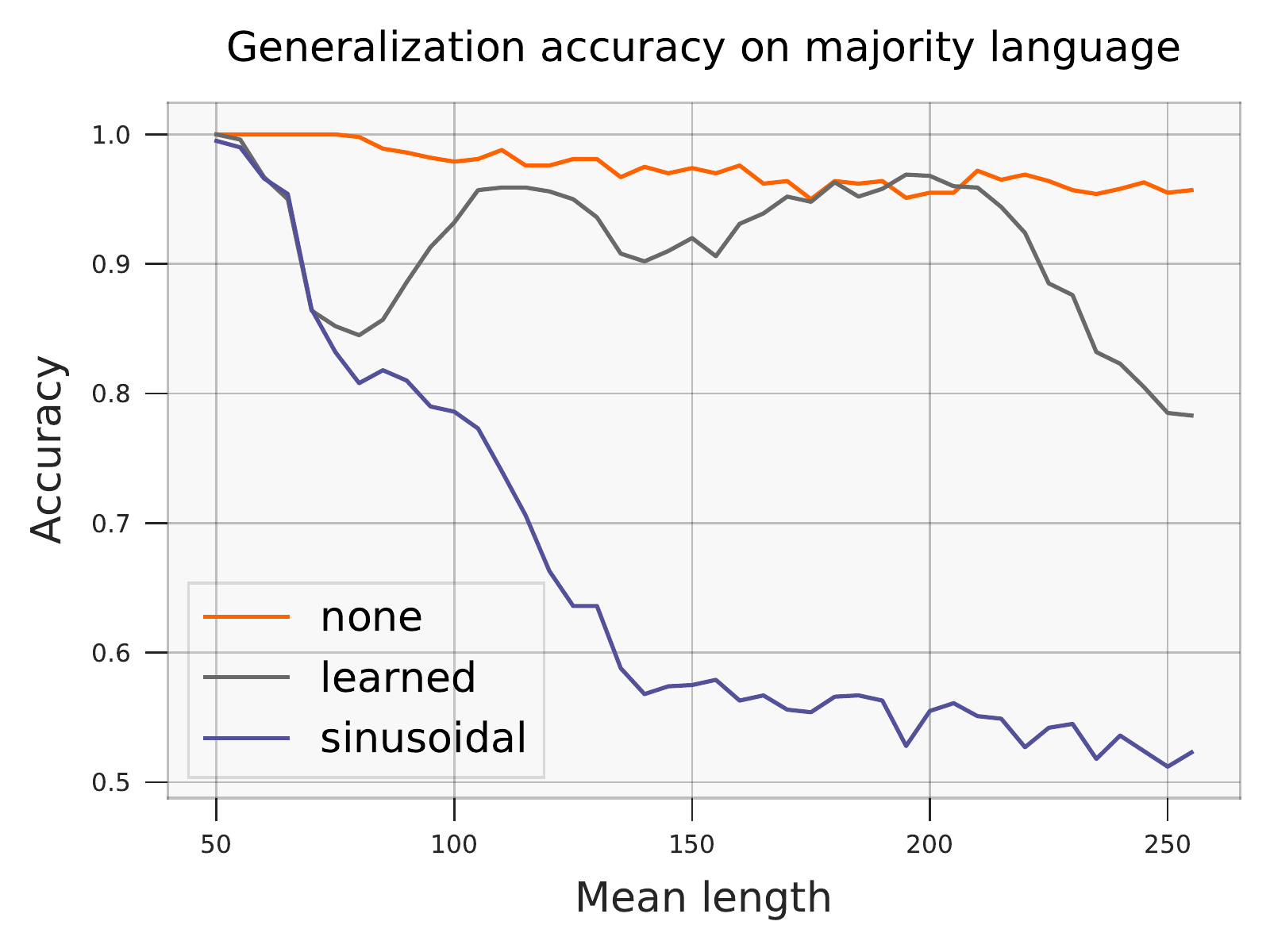}
    \caption{In practice, transformers can learn the majority language (which lies outside $\AC^0$). We train $1$-layer transformers on majority, where each line represents a different positional encoding scheme. Training string length was binomial with $n=100$. Trained models were then evaluated on generalization sets with $n$ ranging from $100$ to $500$.  Mean length ($x$ axis) is $n/2$.}
    \label{fig:accs}
\end{figure}

\subsection{Empirical Validation}

In \autoref{fig:accs}, we show empirically that a $1$-layer transformer can learn and generalize $\maj$.
This supports our argument that the theoretical limitations of hard-attention transformers do not apply to practical transformers.
% , for which we suggest saturated transformers can serve as a better model.
We train with three different types of positional encoding: none, meaning no positional information; learned, where each position gets a trainable embedding vector, and the sinusoidal scheme of \citet{vaswani2017attention}.
The model with no positional embeddings generalizes the best, followed by the learned embeddings.
\new{It appears that while $\maj$ is in the capacity of the transformer, the standard sinusoidal positional embedding scheme provides the wrong inductive bias for learning it.}
This recalls the finding of \citet{yao2021selfattention} that the choice of positional encodings seems to greatly impact the transformer's ability to generalize formal language tasks to longer sequences.

\section{\new{Size of Transformer Values}} \label{sec:complexity-bounded}

\new{The theoretical limits on hard-attention transformers were derived by \citet{angluin2021} by bounding the size in bits of the representation $v_{\ell,i}$ at each layer $\ell$ and position $i$. Specifically, they show that the value $v_{\ell,i}$ is representable in $\O(\log n)$ bits on input sequences of length $n$.
Thus, each value can only contain limited information about the input sequence, intuitively explaining their upper bound on hard-attention transformers.
% They use this to construct an $\AC^0$ circuit that simulates a hard-attention transformer \citep[Cor.~1]{angluin2021}.
Inspired by their analysis, this section will show that, in a saturated transformer, each $v_{\ell,i}$ also has a size of $\O(\log n)$ bits.
Later in \autoref{sec:simulation}, we will use this property to show that saturated attention transformers are limited in the formal languages they can recognize.}

\subsection{\new{Size of Float Sums}}  \label{sec:redundant}

\new{How many bits does it take to represent the value of an attention head within a saturated transformer?}
As a naive bound, the \new{output of} a saturated attention head is specified by \new{a float for each of $n$ values attended over from the last layer, which would take at least linearly many bits in $n$}. \new{However, this upper bound on its size is not tight.} Instead, we will show that all head and activation values can be represented in $\O(\log n)$ bits.
Our analysis will rely heavily on the following lemma:
\begin{lemma} \label{lem:lin-bits}
\new{
Let $v_1, \cdots, v_n$ be a sequence of floats, each with size at most $z$.
Then there exists $c$ such that $\sum_{i=1}^n v_i$ has size at most $4cz + 2\log n + 1$.
% If each $v \in \mathcal V \subseteq \mathbb{F}^k$ has size $\leq z$, then there exists $c$ such that each
% % each $y \in f(\mathcal V)$ and
% $s \in S_n(\mathcal V)$ has size $\leq 4cz + 2\log n + 1$.
}
\end{lemma}
\begin{proof}
\new{
% We consider a sequence of values $v_1, \cdots, v_m$ that sum to $s \in S_n(\mathcal V)$, where $m \leq n$.
Let $p_i, q_i$ denote the numerator and denominator of the floating point $v_i$, respectively.}
Similarly, let $p_s, q_s$ be the numerator and denominator of the float $s$.
By assumption, there exists $c$ such that each $p_i, q_i$ both have size $\leq cz$ for large enough $n$.
We let $p_\mathrm{max} = \max_i p_i$ and analogously for $q_\mathrm{max}$. Since all $q_i$'s are powers of $2$, the numerator $p_s$ is
\begin{equation*}
    \sum_{i=1}^n p_i \cdot \frac{q_\mathrm{max}}{q_i} \leq n p_\mathrm{max} q_\mathrm{max} ,
\end{equation*}
which, represented as an integer, has size:
\begin{equation*}
\new{
    \leq \log n + cz + cz = 2cz + \log n .
}
\end{equation*}
On the other hand, the denominator $q_s = q_\mathrm{max}$, which has size $\leq z$. \new{Therefore, the float representing the sum has size
\begin{align*}
    &\leq 1 + 2 \max(2cz + \log n, z) \\
    &= 4cz + 2 \log n + 1 ,
\end{align*}
which completes the proof.
}
\end{proof}

In particular, we will use \autoref{lem:lin-bits} to show that, \new{when each of a sequence of $n$ values has size $\O(\log n)$, the sum will also have size $\O(\log n)$.}
% \begin{corollary}
% If $f \in \mathcal P(\mathbb{F})$ and, for each $v \in \mathcal V \subseteq \mathbb{F}^k$, $v$ has size $\O(\log n)$, then, for each
% $s \in S_n(f(\mathcal V))$, $s$ has size $\O(\log n)$.
% \end{corollary}

\subsection{\new{Size of Transformer Values}}

\new{We will now leverage \autoref{lem:lin-bits} to show that the values are of bounded size in any transformer over floats with an elementwise-size-preserving attention function.
\begin{definition}
A function $\alpha : \mathbb D^n \to \mathbb D^n$ is elementwise-size-preserving if, for $1 \leq i \leq n$, the function $x_i \mapsto \alpha(x)_i$ is size-preserving (where $x \in \mathbb{D}^n$).
\end{definition}
Note that saturated attention satisfies this definition. We are ready to prove a theorem bounding the size of the representations in transformers with elementwise-size-preserving attention.}
\begin{proposition} \label{thm:logspace}
\new{For any transformer over $\mathbb{F}$ with $\phi, s_{\ell,h}, f_\ell \in \mathcal P$ and $\alpha$ elementwise-size-preserving, for all $\ell \leq L$ and $i \leq n$, $v_{\ell,i}$ has size $\O(\log n)$.}
% For all $\ell$, $v \in \mathcal V_\ell$ has size $\O(\log n)$.
\end{proposition}
\begin{proof}
By induction over $\ell$. \new{The proof follows the definition of transformer computation in \autoref{sec:transformers}.}
% Start with the base case.
% \begin{align*}
%     \mathcal V_0 &= \femb(\Sigma \times [n]) \\
%     S(\mathcal V_0) &= S(\femb(\Sigma \times [n])) .
% \end{align*}
\new{\paragraph{Base Case} $w_i \in \Sigma$ has size $\O(1)$, and $i \in [n]$ has size $\O(\log n)$.
Since $\phi \in \mathcal P$, $v_{0,i} = \phi(w_i, i)$ has size $\O(\log n)$ for all $i$.}
% We can apply \autoref{lem:lin-bits} to get that each $s \in S_n(\mathcal V_0)$ has size $\O(\log n)$.

\paragraph{Inductive Case}
\new{
Assume $v_{\ell,i}$ has size $\O(\log n)$.
Since $s_{\ell+1,h} \in \mathcal P$, $a_{\ell+1,h,i,j} = s_{\ell+1,h}(v_{\ell,i}, v_{\ell,j})$ has size $\O(\log n)$ for all $i,j$.
Since $\alpha$ is elementwise-size-preserving, we can conclude that $\alpha(a_{\ell+1,h,i,:})_j$ also has size $\O(\log n)$ for all $h,i,j$. Multiplying two floats is size-preserving (cf. \autoref{sec:justifying-l}), so $\alpha(a_{\ell+1,h,i,:})_j \cdot v_{\ell, j}$ has size $\O(\log n)$ for all $h, i,j$.
We then apply \autoref{lem:lin-bits} to conclude that $b_{\ell+1,h,i}$ has size $\O(\log n)$, where, recall,
\begin{equation*}
    b_{\ell + 1,h,i} = \sum_{j=1}^n \alpha(a_{\ell,h,i,:})_j \cdot v_{\ell,j} .
\end{equation*}
Finally, computing $v_{\ell+1,i} = f_{\ell+1}(v_{\ell,i}, b_{\ell,:,i})$, we conclude that $v_{\ell+1,i}$ has size $\O(\log n)$ for all $i$ by size preservation.
}
% \soft{Assume each $v \in \mathcal V_\ell$ has size $\O(\log n)$. Since each $s_{\ell+1,h} \in \mathcal P$, $s_{\ell+1,h}(v_i, v_j)$ has size $\O(\log n)$ for any $v_i, v_j$. Thus, each $a \in \mathcal A_{\ell+1}$ has size $\O(\log n)$.
% Since multiplication is size preserving for floats (cf. \autoref{sec:justifying-l}), each value in $\mathcal A_{\ell + 1} \cdot \mathcal V_\ell$ has $\O(\log n)$ size.
% At this point, we apply \autoref{lem:lin-bits} to conclude each value in the set $S_n(\mathcal A_{\ell + 1} \cdot \mathcal V_\ell)$ has $\O(\log n)$ size. At this point, since $h$ is fixed with respect to $n$ and $\fact_{\ell+1} \in \mathcal P$, it follows that each $v \in \mathcal V_{\ell + 1}$ has size $\O(\log n)$.}
\end{proof}

\begin{corollary} \label{cor:saturated-size}
\new{For any saturated transformer over $\mathbb F$ with size-preserving internal functions, for all $\ell \leq L$ and $i \leq n$, $v_{\ell,i}$ has size $\O(\log n)$.}
\end{corollary}

\new{\autoref{cor:saturated-size} follows because saturated attention is elementwise-size-preserving. Softmax attention, on the other hand, is not guaranteed to fulfill this property, since it requires computing the exponential function. This technical challenge prevents generalizing our technique to soft attention.}

\subsection{Discussion}
Similar to hard-attention transformers \citep{angluin2021}, the size of each vector representation in a saturated transformer over floats is $\O(\log n)$.
\new{This is enough memory for individual vectors to ``count'', a behavior that has been observed in both LSTMs \citep{weiss-etal-2018-practical} and transformers \citep{bhattamishra-etal-2020-ability}.}
\new{On the other hand, $\O(\log n)$ space is not enough memory for individual vectors} (for example, the \texttt{CLS} output) to encode arbitrarily large combinatorial objects like trees.
However, transformers are \emph{not} limited to computing in an ``online'' fashion where tokens are consumed sequentially, meaning their effective state is $n$ values of size $\O(\log n)$.
Notably, trees with $n$ leaves can be encoded in a distributed fashion across $n$ values of size $\O(\log n)$. One construction for this is, at index $i$, to store $w_i$ and $i$, along with a pointer $j$ to the parent. Since $i,j$ can both be represented in $\log n$ bits, each vector uses only $\O(\log n)$ bits.

Additionally, the $\O(\log n)$ space bound has implications from the perspective of circuit complexity. While saturated attention cannot be simulated in $\AC^0$, we will show in \autoref{sec:simulation} that saturated transformers \new{over $\mathbb F$} \emph{can} be simulated by $\TC^0$ circuits.

\section{Threshold Circuit Simulation} \label{sec:simulation}

We have proved that each value vector in a saturated transformer over floats has $\O(\log n)$ size. Now, we show how this implies saturated transformers can be simulated by $\TC^0$ circuits. \new{Our results heavily leverage the following lemmas:}

% \will{Is it possible to prove containment in L or TC0?}
% https://www.csa.iisc.ac.in/~chandan/courses/arithmetic_circuits/notes/lec5.pdf
% https://cstheory.stackexchange.com/questions/33487/regular-versus-tc0
% Division in TC0: https://citeseerx.ist.psu.edu/viewdoc/download?doi=10.1.1.72.5978&rep=rep1&type=pdf
% http://citeseerx.ist.psu.edu/viewdoc/download?doi=10.1.1.67.9973&rep=rep1&type=pdf

\new{
\begin{lemma}[\citealt{angluin2021}] \label{lem:angluin}
Any function $f : \{0, 1\}^c \to \{0, 1\}^d$ can be computed by a boolean circuit of depth $3$ and size at most $(2^c + c + 1) d$.
\end{lemma}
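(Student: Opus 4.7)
The plan is to prove this via a standard disjunctive normal form (DNF) construction. Let $m \leq c \log n$ be the number of input bits of $f$, so the truth table of $f$ has at most $2^m \leq 2^{c \log n} = n^c$ rows, and hence at most $n^c$ satisfying assignments.

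I would build the circuit in three levels. First, at level 1, insert up to $c \log n$ negation gates to produce $\neg x_i$ for each input bit $x_i$. Second, at level 2, for every $y \in \{0,1\}^m$ with $f(y) = 1$, introduce an AND gate computing the conjunction of literals $\bigwedge_{i : y_i = 1} x_i \wedge \bigwedge_{i : y_i = 0} \neg x_i$; this gate outputs $1$ on input $x$ iff $x = y$. There are at most $n^c$ such AND gates, one per satisfying assignment. Third, at level 3, take a single OR gate over all the level-2 AND gates. Since the input matches exactly one $y \in \{0,1\}^m$, this OR outputs $f(x)$.

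Summing up, the total number of gates is at most $c \log n$ (negations) $+\ n^c$ (ANDs) $+\ 1$ (OR) $= n^c + c \log n + 1$, and the depth is $3$, matching the claimed bounds.

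There is essentially no hard step here: the result is a direct application of the fact that any boolean function is expressible in DNF with one term per satisfying assignment, and the size bound just reflects that the number of satisfying assignments is bounded by the truth table size $2^m \leq n^c$. The only thing to be mildly careful about is the accounting of negation gates into the level-1 tally (so that the conjunctions at level 2 see literals rather than having to nest negations inside them), which is what forces depth $3$ rather than depth $2$.
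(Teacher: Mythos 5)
Your proof is correct and is essentially identical to the paper's own argument (reproduced in \autoref{sec:angluin-proofs}): a DNF construction with negations at level 1, one AND gate per term at level 2, and a single OR at level 3, with the same gate count $n^c + c\log n + 1$. No further comment is needed.
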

\noindent So that our results are self-contained, we reproduce a proof of this lemma in \autoref{sec:angluin-proofs}. Applying \autoref{lem:angluin} to a size-preserving function with at most $c \log n$ input bits immediately yields:
\begin{lemcorollary} \label{cor:angluin}
Any size-preserving function with at most $c \log n$ input bits can be computed by a boolean circuit of depth $3$ and polynomial size.
\end{lemcorollary}
In other words, such functions can be computed with $\AC^0$ circuits. In addition, we will show that the sum of $n$ floats of size at most $c \log n$ can be computed by $\TC^0$ circuits.

\begin{lemma} \label{lem:sum-tc0}
Let $v_1, \cdots, v_n$ be a sequence of floats, each with size at most $c \log n$ for some $c$. Then the sum $\sum_{i=1}^n v_i$ is computable by a threshold circuit of constant depth and polynomial size.
\end{lemma}
\begin{proof}
Let the integers $p_i, q_i$ be the numerator and denominator of $v_i$. We first compute $q_\mathrm{max}$, the maximum $q_i$, using an $\AC^0$ circuit that compares all pairs $q_i, q_j$, and returns the first $q_i$ such that $q_j \geq q_j$ for all $j$. We then use the fact that multiplication and right shift ($q_i$ is a power of $2$) are in $\TC^0$, in order to compute
\begin{equation*}
    r_i = p_i \frac{q_\mathrm{max}}{q_i}
\end{equation*}
in parallel for all $i$. Note that $q_\mathrm{max}$ and $q_i$ are both powers of $2$, so the division will be exact. Next, we leverage the fact that the sum of $n$ integers of size $\O(\log n)$ is in $\TC^0$ \citep{tc0notes}, in order to compute the numerator of the sum $p' = \sum_i r_i$. We select the denominator as $q' = q_\mathrm{max}$. Finally, we can add an $\AC^0$ circuit that ``reduces'' the fraction by removing shared trailing zeros from $p', q'$, which is possible by \autoref{cor:angluin}. Thus, we have constructed a $\TC^0$ circuit to compute the sum of $n$ floats with size $\O(\log n)$.
\end{proof}
}

\new{We now construct a $\TC^0$ circuit that simulates a saturated transformer over floats.}
\begin{proposition}
% [$\TC^0$ simulation]
$\trans(\mathbb F) \subseteq \TC^0$.
% \begin{equation*}
%     \trans(\mathcal L) \subseteq \TC^0 .
% \end{equation*}
\end{proposition}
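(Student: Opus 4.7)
The plan is to simulate the transformer layer by layer with a constant-depth, polynomial-size threshold circuit, inducting on $\ell$ and leveraging \autoref{thm:logspace}: every intermediate value $v_{\ell,i}$ and every head output has size $\O(\log n)$ bits. The invariant I maintain is that after layer $\ell$, the $n$ vectors $v_{\ell,1},\dots,v_{\ell,n}$ are jointly produced by a $\TC^0$ subcircuit. The base case is immediate: each embedding $v_{0,i} = \femb(w_i, i)$ depends on only the $\O(\log n)$ bits encoding $w_i$ together with $i$, and $\femb \in \mathcal S$ outputs $\O(\log n)$ bits, so \autoref{lem:angluin} provides a depth-$3$ polynomial-size circuit for each output bit, parallelized across the $n$ positions.

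For the inductive step, at each head $h$ and query position $i$ I would: (a) compute every score $a_{\ell,h,i,j} = \fattn_{\ell,h}(v_{\ell,i}, v_{\ell,j})$, each a function of $\O(\log n)$ input bits and thus handled by \autoref{lem:angluin}; (b) determine, for each $j$, whether $j \in \mathcal M(a)$ by ANDing across $k$ the $\O(\log n)$-bit comparisons $a_j \geq a_k$, again via \autoref{lem:angluin} plus one AND; (c) form the head output $b_{\ell+1,h,i} = \frac{1}{\abs{\mathcal M(a)}} \sum_{j \in \mathcal M(a)} v_{\ell,j}$ by iterated float addition followed by division by the count; and (d) apply the activation $\fact_{\ell+1}(v_{\ell,i}, b_{\ell+1,:,i})$, which is an $\O(\log n)$-bit function handled by \autoref{lem:angluin}. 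The final acceptance test, an affine comparison on $v_{L,1}$, is also an $\O(\log n)$-bit function and is trivially in $\TC^0$.

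The main obstacle is step (c), because the iterated float sum has $n \cdot \O(\log n)$ total input bits and cannot be handled directly by \autoref{lem:angluin}. The key leverage is that, by \autoref{thm:logspace} and its underlying \autoref{lem:lin-bits}, the output float's denominator is a power of $2$ of magnitude $\poly(n)$ and its numerator fits in $\O(\log n)$ bits, so no intermediate quantity need exceed polynomial magnitude. I therefore decompose the float sum into: (i) a $\TC^0$ maximum over the $n$ denominators; (ii) position-wise bit shifts that align each numerator to the common denominator; (iii) a classical $\TC^0$ iterated-integer-addition circuit over the aligned, polynomially-bounded numerators; and (iv) a final reduction and approximate float division by the $\O(\log n)$-bit count $\abs{\mathcal M(a)}$, both of which act on $\O(\log n)$-bit inputs and so live in $\TC^0$. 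Each transformer layer adds only $\O(1)$ depth and polynomial size, so after the constant number of layers $L$ the full construction is a $\TC^0$ circuit family recognizing exactly the language of the transformer, establishing $\trans_s(\mathbb F) \subseteq \TC^0$.
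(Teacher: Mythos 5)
Your proposal is correct and follows essentially the same route as the paper: scores and activations are handled as $\O(\log n)$-bit functions via \autoref{lem:angluin}, the argmax set is selected in constant depth, and the crux --- the iterated float sum --- is reduced to $\TC^0$ iterated integer addition by aligning the power-of-two denominators exactly as in \autoref{lem:lin-bits}. The only cosmetic difference is that the paper keeps the head output as a $\langle$sum, count$\rangle$ pair and defers the division, whereas you perform the approximate float division explicitly; both are $\O(\log n)$-bit operations and hence fine.
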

\begin{proof}
For each $n$, we construct a $\TC^0$ circuit that simulates a saturated transformer on inputs of size $n$. We construct the circuit modularly, with one subcircuit for the attention mechanism, and another for the feedforward subnetwork.

% https://www.csa.iisc.ac.in/~chandan/courses/arithmetic_circuits/notes/lec5.pdf
% Iterated integer addition in TC0
\paragraph{Attention Head}
\new{Fix a single head in some layer.} We will construct a $\TC^0$ subcircuit that simulates the attention mechanism \new{at position $i$}.
The head attends over vectors $v_1, \cdots, v_n$. \new{For all $j$,} $v_j$ has size $\O(\log n)$ by \autoref{thm:logspace}.
In parallel for each $j$, we compute the scores $a_{i,j} = \fattn(v_i, v_j)$ with an $\AC^0$ circuit by \autoref{cor:angluin}.
\new{We then compute $a_{i,\mathrm{max}} \triangleq \max_j a_{i,j}$ with an $\AC^0$ circuit by comparing all $v_j$ pairwise, and selecting the first $v_k$ such that $v_k \geq v_j$ for all $j$.
We then compute ``masked'' values $u_{i,j}$ for each $j$ via an $\AC^0$ circuit by \autoref{lem:angluin}:}
\begin{equation*}
    u_{i,j} \triangleq \begin{cases}
        v_j & \textrm{if} \; a_{i,j} \geq a_{i,\mathrm{max}} \\
        0 & \textrm{otherwise.}
    \end{cases}
\end{equation*}
\new{We then compute the sum $s_i \triangleq \sum_{j=1}^n u_{i,j}$ by \autoref{lem:sum-tc0}. By \autoref{lem:lin-bits}, $s_i$ has size $\O(\log n)$. Now, we similarly define}
\begin{equation*}
    z_{i,j} \triangleq \begin{cases}
        1 & \textrm{if} \; a_{i,j} \geq a_{i,\mathrm{max}} \\
        0 & \textrm{otherwise.}
    \end{cases}
\end{equation*}
\new{Using an analogous sum construction with $z_{i,j}$ instead of $u_{i,j}$, we can use a $\TC^0$ circuit to compute $\abs{\mathcal M(a)}$: the number of $j$ such that $a_{i,j} \geq a_{i,\mathrm{max}}$.
Finally, since dividing floats is in $\TC^0$ (cf.~\autoref{sec:floats}), we can compute the head output as $s_i / \abs{\mathcal M(a)}$, which has size $\O(\log n)$ by size preservation of division.}

% Since $s \in \mathcal P$, $\abs{a_{i,j}} = \O(\log n)$.
% \will{TODO: Compute saturated mechanism here}
% The next step is to compute the normalized attention weights by dividing. To do this, we first use \autoref{cor:angluin} to compute each $c_{i,j} \triangleq \exp(a_{i,j})$ in parallel, the output of which has $\O(\log n)$ size since $\exp \in \mathcal P$ (cf. \autoref{sec:justifying-l}).
% At this point, we use \autoref{lem:sum-tc0} to compute $C_i \triangleq \sum_j c_{i,j}$ with a $\TC^0$ circuit. $C_i$ has size $\O(\log n)$ by \autoref{lem:lin-bits}. We then use the fact that division is in $\TC^0$ \citep{hesse2001division} to compute the output of the softmax:
% \begin{equation*}
%     w_{i,j} \triangleq \zeta(a_i)_j = \frac{c_{i,j}}{C_i} .
% \end{equation*}
% $w_{i,j}$ has size $\O(\log n)$ since division is size-preserving (cf. \autoref{sec:justifying-l}).
% Next, we compute in parallel each attention-weighted value $u_{i,j} \triangleq w_{i,j} v_j$, leveraging the fact that multiplication can be done in $\TC^0$.
% Since multiplication is size-preserving, $\abs{u_{i,j}} = \O(\log n)$.
% Finally, we use \autoref{lem:sum-tc0} to compute the head output as $\sum_{j=1}^n u_{i,j}$, which has size $\O(\log n)$ by \autoref{lem:lin-bits}.

\paragraph{Feedforward}
\new{As input, $\fact$ receives $v_i$ as well as $H$ head outputs, all of which have size $\O(\log n)$.}
% each of which is a pair of $\O(\log n)$-bit floats $o_h, m_h$ for each head $h$.
As the total size of the input is $\O(\log n)$, we can use \autoref{cor:angluin} to compute the output of $\fact$ with an $\AC^0$ circuit.
\new{The size of the output is $\O(\log n)$ by size preservation of $f$.}
\new{The same idea holds for $\femb$ as well as the linear classification head.}

% \paragraph{Classifier} Finally, we can \soft{compute the language recognition decision as a function of a value vector in the last layer with an $\AC^0$ circuit by again applying \autoref{cor:angluin}}.

\new{We have simulated each transformer component with a $\TC^0$ subcircuit, completing the proof.}
\end{proof}

\subsection{Discussion}

Recall that, \new{over rationals}, we found that \new{size-preserving saturated} transformers could recognize any language.
In contrast, we have now shown that using floating-point representations \new{places such transformers} within $\TC^0$.
In this paper, we have only considered non-uniform $\AC^0$ and $\TC^0$, as opposed to the uniform variants of these classes, \new{which are more closely connected} to familiar formal language classes like the regular \new{and context-free} languages \citep[cf.][]{cojocaru2016advanced, mahajan2007polynomial}. As transformers satisfy some intuitive notion of uniformity, an open question is \new{whether saturated transformers also fall into uniform $\TC^0$}.
% As discussed in \autoref{sec:definitions}, there is a rich hierarchy of linguistically interpretable classes between \u$\AC^0$ and \u$\TC^1$, including the regular languages.
% We have not shown whether transformers fall into the uniform variants of these classes, although transformers satisfy some intuitive notion of uniformity. We therefore conjecture that they fall within these log-space uniform classes, which is an interesting question to resolve going forward.

\section{Conclusion}
%Perhaps the most important high-level take away from this paper is that 
% \nascomment{I commented out some indirection here to make this pack more of a punch.  there is still the issue I raised in the intro about calling out ``uniform attention'' as a thing more explicitly.}
Compared to hard attention, saturated attention adds theoretical power to transformers.
We \new{showed} that saturated attention lets transformers recognize languages outside $\AC^0$, which is the upper bound \new{with hard attention}. Further,
while \new{saturated} transformers with rational values and \new{size-preserving internal functions} can recognize any language,
we characterize the limits of \new{size-preserving saturated transformers} with \emph{floats}.
Specifically, saturated transformers with float values fall in $\TC^0$, a more powerful circuit class than $\AC^0$. Thus, going from hard to \new{saturated} attention can be understood as augmenting the model with threshold gates. This illustrates one way that the circuit complexity paradigm characterizes the power of transformers. \new{Going forward, there are many interesting open questions that circuit analysis can answer, such as comparing the power of saturated and soft attention, and refining existing upper bounds for transformers in terms of uniform circuit families.}

\iftaclpubformat
\section*{Acknowledgments}
Thanks to Yiding Hao, Dana Angluin, and Robert Frank for sharing an early draft of their work. We also appreciate helpful feedback from Dana Angluin, Matt Gardner, Yoav Goldberg, Michael Hahn, Kyle Richardson, and Roy Schwartz.
\fi

\appendix

\section{Float Division} \label{sec:floats}

Let $/$ be truncated division between integers.
We divide a float by an integer $p$ by defining an approximate multiplicative inverse $p^{-1}$. The numerator is $2^{\abs{p}} / p$ and the denominator is $2^{\abs{p}}$.
For division by a float $p, q$, we simply apply the integer approach and then multiply by $q$. This yields numerator $2^{\abs{p}} / p \cdot q$ and denominator $2^{\abs{p}}$.

\new{The fact that float division is defined in terms of integer multiplication and division implies it is size-preserving and can be simulated in $\TC^0$, which we use in \autoref{sec:simulation}.}

\section{Justifying Size Preservation} \label{sec:justifying-l}

We justify that feedforward neural networks are \new{size-preserving over floats}. Feedforward neural networks are made up of a fixed (with respect to $n$) number of addition, multiplication, division, ReLU, and square root (for layer norm) operations.
Therefore, it suffices to show that these operations are all in $\mathcal S(\mathbb F)$.

For addition, the numerator is
\begin{equation*}
    \leq p_1 q_2 + p_2 q_1 \leq 2 p_\mathrm{max} q_\mathrm{max} ,
\end{equation*}
which has size $\leq \log 2 + \abs{p_\mathrm{max}} + \abs{q_\mathrm{max}} \leq 2 (\abs{p_\mathrm{max}} + \abs{q_\mathrm{max}})$ for large enough input size.

For multiplication, the numerator is just $p_1 \cdot p_2$, which has size $\leq 2 \abs{p_\mathrm{max}}$. Let the denominators be $q_1 = 2^{k_1}$ and $q_2 = 2^{k_2}$. Then the denominator is $2^{k_1 + k_2}$, which has size $\leq 2 \abs{q_\mathrm{max}}$.

Division can be analyzed in terms of the approximate multiplicative inverse (\autoref{sec:floats}).\footnote{The exact multiplicative inverse $\langle p, q \rangle \mapsto \langle q, p \rangle$ over unconstrained rationals is also size-preserving. Thus, neural networks are size preserving over both floats and rationals.} Its numerator has size $\leq \abs{p} + 1 + \abs{q} \leq 2 (\abs{p} + \abs{q})$ for large enough input size. The denominator has size $\leq \abs{p} + 1 \leq 2 \abs{p}$ for large enough input size.

Size preservation is trivially satisfied for ReLU, which cannot expand the size of the input.

To make layer norm work, we just need to analyze square root, which we will define in a truncated fashion over integers. The square root of a rational, then, simple takes the square root of $p$ and $q$. We have that $\abs{\sqrt{p}} \leq \abs{p}$ and analogously for $q$.

\section{Resource-Bounded Transformers} \label{sec:resource-bounded}

Size preservation is one way to characterize \new{the constraints on transformers' internal functions}; a slightly different perspective is to fix $\phi$ and analyze how the language recognition abilities of the transformer change depending on the computational resources allotted to each \new{$s_{\ell,h}$ and $f_\ell$}. In this section, we derive an alternate universality theorem \new{in terms of time complexity classes}. We will show that as long as $\phi$ is powerful enough, \new{such transformers have equivalent time complexity to their activation functions.}

% Let $\mathcal L$ denote the class of embedding functions $\Sigma \times \mathbb N \to \{0, 1\}^*$ that run in time linear in the full sequence length, or, equivalently, on input $\langle w_i, i \rangle$, exponential in the size of $i$ (in bits).

Recall that a transformer is a tuple \new{$\langle \Sigma, \mathbb{D}, \alpha, L, H, \phi, s_{\ell,h}, f_\ell \rangle$.} In contrast to $\trans(\mathbb D)$ (cf.~\autoref{def:trans-class}), we will now work with \new{a different class of transformer languages} \new{$\trans(\mathbb D, T(m))$ We will allow the embedding functions to be linear in the sequence length, and explore the effect of varying the complexity of the other internal functions}.
\new{Let $\mathsf{FTIME}(T(m))$ be the set of functions computable by a Turing machine in $T(m)$ time.\footnote{We write $\mathrm{FTIME}(m)$ instead of the conventional $\mathrm{FTIME}(n)$ to avoid confusion with the sequence length $n$.}
\begin{definition}
Let \new{$\trans(\mathbb D, T(n))$} be \new{the class of languages $\mathcal L \subseteq \Sigma^*$ such that there exists a transformer $\langle \Sigma, \mathbb{D}, \alpha, L, H, \phi, s_{\ell,h}, f_\ell \rangle$ that recognizes $\mathcal L$, where $\phi$ runs in time linear in the sequence length $n$, and $s_{\ell,h}, f_\ell \in \textsf{FTIME}(T(m))$}.
\end{definition}}

% A functional complexity class is a set of functions $f: \{0, 1\}^* \to \{0, 1\}^*$ such that, for each $f$, there exists a Turing machine (obeying certain resource constraints) computing $f(x)$ on input $x$ for all $x \in \{0, 1\}^*$. For example, for $\mathrm{FP}$, the functional form of the canonical complexity class $\mathrm{P}$, the Turing machine must run in $\poly(\abs{x})$ time. We denote by $\mathcal C$ a generic complexity class of decision problems, and by $\mathrm F \mathcal C$ its functional extension. In particular, $\mathrm{FTIME}(m)$ is the class of functions computable in $\O(m)$ time on inputs of size $m$.}

%  $\mathcal C$ such that $\mathrm{FTIME}(m) \subseteq \mathcal F \mathrm C$,
\new{For any $T(m) \geqslant m$, we will show transformers $\trans(\mathbb D, T(m))$ have the complexity of their activation functions.} Formally:
% i.e., $\mathcal C \subseteq \trans(\mathbb D, \mathrm F \mathcal C)$.

% \ashish{How about: For any complexity class $\mathcal C$ containing $\mathrm{PTIME}(m)$, ... .  Also, with $\mathcal C$ being at least this strong, can we not simulate the other way too, so we have equality in the result?}

% \will{Wait, is it the case that, if PTIME(m) in C, FTIME(m) in FC?}
\paragraph{Theorem 2 \; {\normalfont (Formal version of \autoref{thm:resource-bounded-informal})}}
\textit{
\new{For $\mathbb D \in \{ \mathbb F, \mathbb Q \}$ and $T(m) \geqslant m$,}
\begin{equation*}
    \new{\trans(\mathbb D, T(m)) \subseteq \mathsf{TIME}(T(m)) .}
\end{equation*}
% For any complexity class $\mathcal C$ such that $\mathrm{FTIME}(m) \subseteq \mathrm F \mathcal C$,
}
% \begin{equation*}
%     \trans(\mathbb D, \mathrm F \mathcal C) = \mathcal C.
% \end{equation*}

\begin{proof}
\new{First, observe that $\trans(\mathbb D, T(m)) \subseteq \mathsf{TIME}(T(m))$, since the embedding function and saturated attention can be computed in time linear in the input sequence length, and the other \new{internal functions} can be computed in $\mathsf{FTIME}(T(m))$ by construction.}

We now show $\mathsf{TIME}(m) \subseteq \trans(\mathbb D, T(m))$.
We adopt a $1$-layer transformer construction, \new{and thus omit $\ell,h$ subscripts}.\new{We define three components of the embedding function $\phi : \Sigma \times \mathbb{N} \to \mathbb D^3$:}
\begin{align*}
    \femb(w_i, i)_1 &=
    \begin{cases}
        2^{i - 1} & \textrm{if} \; w_i = 1 \\
        0 & \textrm{otherwise}
    \end{cases} \\
    \new{\femb(w_i, i)_2} &= \new{i} \\
    \new{\femb(w_i, i)_3} &= \new{2^{\abs{i}} .}
\end{align*}
\new{Each of these components is computable in time linear in $n$.}
\new{Define three heads $b_{1,i}$, $b_{2,i}$, $b_{3,i}$. Without loss of generality, consider $b_{h,i}$ to act on $\femb(w_i, i)_h$ alone, rather than the full embeding vector. $b_{1,i}$ is defined as a uniform head, while $b_{2,i}$ and $b_{3,i}$ are computed with $s_h(v_i, v_j) = v_j$. Thus,}
\begin{align*}
    b_{1,i} &= \frac{1}{n} \sum_{w_j = 1} 2^{j - 1} \\
    \new{b_{2,i}} &= \new{n }\\
    \new{b_{3,i}} &= \new{2^{\abs{n}} .}
\end{align*}
\new{Finally, we discuss how to set $f$ to compute whether $w \in \mathcal L$.}
Let $p$ be the function that extracts the numerator of a float or rational number, which is computable in $\O(m)$ time on \new{float} of size $m$. \new{Within $f$, we compute $u = p(b_{1,i})$.} At this point, we proceed in two cases depending on the datatype $\mathbb D$:

\begin{compactenum}
    \item \textbf{Rationals:} If $\mathbb D = \mathbb Q$, then $u$ is the binary string $w$. \new{Any $\mathcal L \in \mathsf{TIME}(T(m))$ has an indicator function $\delta \in \mathsf{FTIME}(T(m))$, which we now apply to recognize whether $w \in \mathcal L$.}
    
    \item \textbf{Floats:} If $\mathbb D = \mathbb F$, then $u = 2^{\abs{n}} / n \cdot w$ as in \autoref{sec:floats}. \new{Therefore, in linear time, we compute
\begin{equation*}
    \frac{b_{2,i}}{b_{3,i}} \cdot u = \frac{n}{2^{\abs{n}}} \cdot \frac{2^{\abs{n}}w}{n} = w ,
\end{equation*}}
%     $m = 2^{\abs{n}} / n$ from $b_{2,i} = 2^{n-1}$ using truncated integer division, and then apply
% \begin{equation*}
%     \frac{p}{m} = \frac{mw}{m} = w ,
% \end{equation*}
and feed $w$ through $\delta$ as in the $\mathbb D = \mathbb Q$ case.
% case to recognize whether $w \in L$.
\end{compactenum}
\new{So, $\mathsf{TIME}(T(m)) \subseteq \trans(\mathbb D, T(m))$.}
\end{proof}

\new{\section{Proof from \citet{angluin2021}} \label{sec:angluin-proofs}

The proof for \autoref{lem:angluin} largely follows the proof of a core lemma of \citet{angluin2021}.
We reproduce a slightly adapted version of their proof here, since their manuscript is not yet publicly available, and we wish for our paper to be self-contained.
\paragraph{Lemma 2} Any function $f : \{0, 1\}^c \to \{0, 1\}^d$ can be computed by a boolean circuit of depth $3$ and size at most $d(2^c + c + 1)$.
% \begin{lemma}[\citealp{angluin2021}] \label{lem:angluin-tech}
% \end{lemma}
\begin{proof}
The idea of the proof is to define $d$ subcircuits of size at most $2^c + c + 1$ that compute the $d$ output bits of $f$ in parallel.
We will build a circuit that computes each output bit of $f$ according to its representation in disjunctive normal form (DNF).
We define a first layer of the circuit that computes the negation of each input, which takes $c$ gates. The second layer then computes the value of each DNF term by computing a conjunction ($\wedge$ gate) over the corresponding literals or negated literals. Note that a formula of $c$ variables has at most $2^c$ DNF terms. Finally, the third layer of the circuit computes a disjunction ($\vee$ gate) over the values of all terms, yielding the output of $f$, and adding a single gate.
In summary, we have shown how to compute each output bit with a circuit of size at most $2^c + c + 1$, which implies the full function $f$ can be computed by a circuit of size at most $d(2^c + c + 1)$.
\end{proof}
% \noindent \soft{\autoref{lem:angluin} follows as a corollary of \autoref{lem:angluin-tech}, since a function with $c \log n$ input bits and one output bit can be computed by a circuit of size $n^c + c \log n + 1$. For $k$ output bits, we simply construct $k$ such circuits in parallel, which yields a full circuit of size $k(n^c + c \log n + 1)$.
}

\bibliographystyle{acl_natbib}
\bibliography{references}

\end{document}